\documentclass[10pt]{article}
\usepackage[preprint]{tmlr}
\usepackage{amsmath,amssymb,amsthm}
\usepackage{graphicx}
\usepackage{booktabs}
\usepackage{array}
\usepackage{microtype}
\usepackage{xcolor}
\definecolor{linkc}{rgb}{0,0.2,0.55}
\usepackage[colorlinks=true,linkcolor=linkc,citecolor=linkc,urlcolor=linkc]{hyperref}
\usepackage{url}
\newcommand{\doi}[1]{doi: \href{https://doi.org/#1}{\nolinkurl{#1}}}
\usepackage[most]{tcolorbox}
\usepackage{enumitem}

\newcommand{\btheta}{\boldsymbol{\theta}}
\newcommand{\R}{\mathbb{R}}
\newcommand{\panel}[1]{\textbf{#1}}
\newtheorem{theorem}{Theorem}
\newcommand{\restatelabel}{}
\newtheorem*{restatement}{Theorem~\ref{\restatelabel}}

\newcounter{restateeq}

\title{Double descent is the principle of least action}
\author{\name Congzhou M Sha \email consha@sas.upenn.edu \\
      \addr Penn Medicine Doylestown Hospital, 595 W State St, Doylestown, PA 18901, United States}

\def\month{MM}  
\def\year{YYYY} 
\def\openreview{\url{https://openreview.net/forum?id=XXXX}} 

\begin{document}
\maketitle
\begin{abstract}
\noindent The test error of a model plotted against its number of parameters $d$ falls, peaks when the model can just fit the training data, and falls again, exhibiting the double descent phenomenon. We explain the phenomenon with statistical mechanics. The training trajectory of a stochastic gradient-based method is a particle wandering over the energy landscape of the training loss at an induced temperature $T$, and a run that has equilibrated visits every parameter vector of a given training loss equally often, the fundamental postulate of statistical mechanics, with probability given by the Boltzmann distribution. Because training starts at an initial point and has only finite time to diffuse, it carries an effective weight decay, which makes every parameter a quadratic degree of freedom. The equipartition theorem then distributes the energy among the $d$ degrees of freedom in shares of $T/2$, so at a fixed training loss adding parameters lowers the temperature and drives the Boltzmann distribution toward the stationary path. Finally, adding parameters can only lower the $L^2$ norm of the stationary path, so a solution sampled at fixed loss is less likely to be large with increasing $d$, effectively increasing weight regularization.
\end{abstract}

\section{Introduction}
Consider the simple regression task as described by \citet{nakkiran2019blog,nakkiran2020deep}: fitting 20 noisy samples of a cubic function on the interval $[-1,1]$ using polynomials with varying numbers of parameters $d$ (Fig.~\ref{fig:main}\panel{A--D}). $d$ counts all the coefficients of the polynomial including the constant term, so a polynomial of degree $d-1$ has $d$ parameters. $d=2$, i.e. linear regression $f(x)=\theta_1x+\theta_0$, produces an underfit whereas $d=4$, i.e. the cubic $f(x)=\theta_3x^3+\theta_2x^2+\theta_1x+\theta_0$, is the intended ground truth solution. For $d=20$, there are exactly as many coefficients as there are points, so exactly one polynomial passes through all of the training data, and it oscillates wildly between the data. Surprisingly, fitting a $d=1000$ curve also passes through every data point, yet it tracks the true cubic nearly as well as the $d=4$ solution within the training data range, but without the enormous oscillations of the $d=20$ polynomial. \par Plotting the test error as a function of $d$ (Fig.~\ref{fig:main}\panel{E}) shows a classical U-shape initially, with a peak at $d=n$, however there is a long decreasing tail to the right. This double descent phenomenon~\citep{belkin2019reconciling,nakkiran2020deep} appears in deep networks as a function of layer width and number of training epochs. Exact treatments exist for linear and random-feature models~\citep{hastie2022surprises,mei2022generalization,bartlett2020benign}, and the threshold has been identified with a jamming transition~\citep{spigler2019jamming,geiger2020scaling} and with broken ergodicity~\citep{li2026broken}. The aim of this work is to provide further elementary physical intuition for the double descent phenomenon, with a pedagogical focus so that it is understandable to undergraduates.

\begin{figure}[!t]
  \centering
  \includegraphics[width=\textwidth]{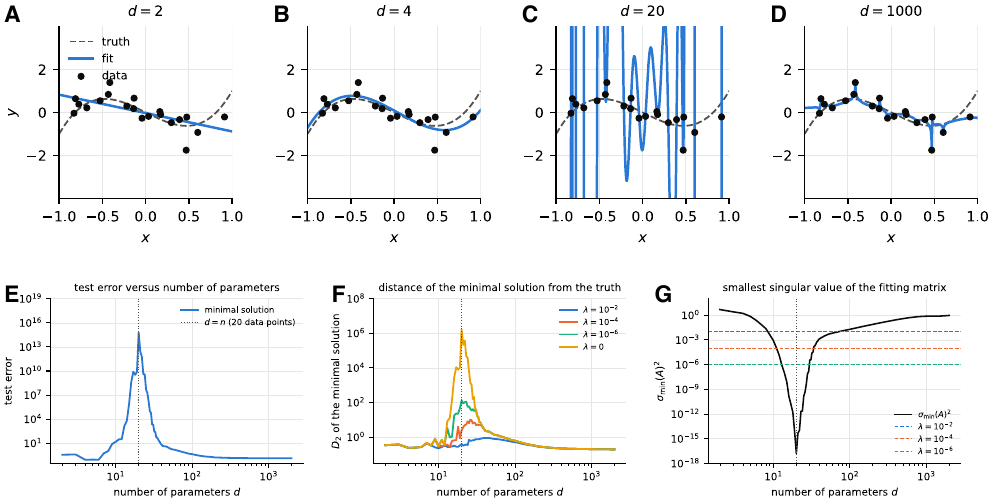}
  \caption{\panel{A--D} Legendre-basis fits with $d$ parameters (degree $d-1$, constant term included) to $n=20$ points $y_i=3x_i^3-2x_i+\xi_i$, $\xi_i\sim\mathcal N(0,0.4^2)$: least squares for $d\le n$, minimum-norm interpolation for $d>n$. \panel{E} Test error of the minimal solution, the least squares fit for $d\le n$ and the minimum-norm interpolant for $d>n$, which peaks at $d=n$. \panel{F} Root mean square distance from the true cubic over the range $I=[x_1,x_n]$ of the training inputs, $D_2=\big(\frac1{|I|}\int_I(f-q)^2\big)^{1/2}$ with $q(x)=3x^3-2x$, of the minimal solution $\btheta^*=(A^{\top}A+\lambda I)^{-1}A^{\top}y$ of Eq.~\eqref{eq:ridge-square}, for four values of the weight decay $\lambda$. \panel{G} The square of the smallest singular value of the fitting matrix, $\sigma_{\min}(A)^2$; the horizontal lines mark the three regularization levels of \panel{F}. The amplification of label noise into the fit is governed by $1/(\sigma_{\min}(A)^2+\lambda)$, so each curve in \panel{F} peaks where $\sigma_{\min}(A)^2$ falls below its $\lambda$.}
  \label{fig:main}
\end{figure}

\section{The statistical mechanics of the loss function}
\subsection{Microstates and the fundamental postulate}
\label{sec:postulate}
Statistical mechanics is the science of observing nature at large scales~\citep{reif1965fundamentals}. Just as we may know the energy of a gas but are not concerned with the coordinates and velocity of every molecule, we may know the training loss of a model but are not concerned with the specific numerical values of the model's parameters. Any set of trained weights which produces a low test loss generally makes the user happy. However, we can reason \textit{a posteriori} about the properties of the minima and how the double descent phenomenon arises, given some assumptions on the definition of \textit{training for a sufficient length of time}, i.e. \textbf{ergodicity}. \par Say that we flip ten coins and only know that seven are heads. A \textit{microstate} in this context is defined as the specific sequence of coin flip results, e.g. HTHTHHHTHH. Since we are given only the information about the total number of heads, multiple sequences (microstates) are possible, i.e. $\binom{10}{7}=\binom{10}{3}=120$. We may then define the \textbf{macrostate} $M$ of ``sequences of 10 coin flips with 7 heads" as the set of these 120 sequences, and the number of heads is known as an \textbf{observable} $O$. The number of such microstates in a macrostate is known as the multiplicity $\Omega(M)$. The \textbf{entropy} then is defined as $S\sim\log\Omega$ (we omit the Boltzmann factor $k_B$).
\begin{tcolorbox}[enhanced,colback=blue!4,colframe=blue!40!black,arc=3pt,boxrule=0.6pt,left=5pt,right=5pt,top=3pt,bottom=3pt,before
skip=4pt,after skip=4pt] \textbf{Fundamental postulate of statistical mechanics.} If all that is known about a physical system is its macroscopic observable, then every microstate consistent with that macrostate is equally probable to be the current physical state of the system.
\end{tcolorbox}
In the context of optimization, a \emph{microstate} is a specific choice of numerical parameters $\btheta$ for a model $M$, the \textit{observable} is the training loss $L(M(\btheta))$ which we will think of as the total energy $E$ of the system, for example the least squares objective $L(M(\btheta))=\frac1{2n}\sum_i(M(x_i;\btheta)-y_i)^2$\footnote{We include an arbitrary factor of $\frac12$ to interpret the loss as potential energy.}, and the \textit{macrostate} is the set of all parameters for which the model achieves this loss.
\subsection{Ergodicity}
\label{sec:ergodicity}
During model training, stochastic gradient descent~\citep{robbins1951stochastic} or another stochastic gradient-based method~\citep{kingma2015adam,loshchilov2019decoupled} is used to move from higher to lower values of the loss function. When the training error levels off, training is assumed to have reached an equilibrium. During training, a trajectory of parameters $\{\btheta_1,\btheta_2, \cdots\}$ are visited. A particular training trajectory $\{\btheta_i\}$ is called \textbf{ergodic} if it visits a representative sample of microstates, such that computing any macroscopic observable based on that sample yields the same value as if we performed the analytic sample across all solutions.
\par An ergodic trajectory that has settled at a training loss $E$ therefore samples the macrostate at that loss, the level set $\{\btheta: L(\btheta)=E\}$. In the polynomial fitting example this set is explicit. For $d\ge n$ the loss can reach $E=0$ exactly, the interpolating regime, and the macrostate is the $(d-n)$-dimensional set of interpolants; for $E>0$ it is a contour of dimension $d-1$. What the analysis needs is that the trajectory samples this set without preference for any part of it, which we take as an assumption:
\begin{tcolorbox}[enhanced,colback=blue!4,colframe=blue!40!black,arc=3pt,boxrule=0.6pt,left=5pt,right=5pt,top=3pt,bottom=3pt,before
	skip=4pt,after skip=4pt] \textbf{Ergodicity of model training.} Training a randomly-initialized model $M$ with $d$ parameters to a specified tolerance (e.g. $L(M(\btheta))=E$) produces a microstate $\btheta_d$ which is drawn uniformly from the corresponding macrostate.
\end{tcolorbox}
With this assumption in place, there is an analogy between coin flips and model training (Table~\ref{tab:analogy} in Appendix~\ref{app:coins}).

\subsection{The Boltzmann distribution and equipartition}
\label{sec:boltzmann}
The fundamental postulate weights every microstate of a fixed energy equally. Training however, does not fix the energy: the loss fluctuates from step to step. Each mini-batch reports a slightly different gradient, and the mini-batch gradient is well modeled as the full gradient plus Gaussian noise~\citep{mandt2017stochastic}. The most accurate description of training is a probability distribution over energies, and the fundamental postulate determines that this distribution must be the \textbf{Boltzmann distribution} (Appendix~\ref{app:boltzmann}):
\begin{equation}
  P(s)=\frac{e^{-E_s/T}}{Z},\qquad Z=\sum_{s}e^{-E_s/T},
  \label{eq:boltzmann}
\end{equation}
where the sum, or integral, runs over all microstates of the system regardless of energy, and $Z$ normalizes the probabilities~\citep{reif1965fundamentals}.
\par In the context of optimization, the energy is the training loss. Training for a finite time regularizes the problem, and is equivalent to weight decay of strength $\lambda\propto1/t$~\citep{ali2019continuous}, so we take the energy to be the loss with weight decay,
\begin{equation}
  L_\lambda(\btheta)=L(\btheta)+\tfrac{\lambda}{2}\|\btheta\|^2,\qquad\lambda>0.
  \label{eq:ridge-loss}
\end{equation}
For the least squares problem in Fig.~\ref{fig:main}, $L(\btheta)=\tfrac12\|A\btheta-y\|^2$ and $A$ is the fitting matrix of the basis functions at the training inputs Eq.~\eqref{eq:design}. The probability density of observing a candidate parameter $\btheta$ during training given the observed labels is then
\begin{equation}
  P(\btheta\mid y)\;\propto\;\exp\Big[-\frac{L_\lambda(\btheta)}{T}\Big],
  \label{eq:likelihood}
\end{equation}
with $T$ determined by the properties of the optimization algorithm.
\par For least squares, the loss function with weight decay is a paraboloid. Expanding the norms and completing the square,
\begin{align}
  L_\lambda(\btheta)&=\tfrac12\,\btheta^{\top}A^{\top}A\btheta-y^{\top}A\btheta+\tfrac12\|y\|^2+\tfrac{\lambda}{2}\,\btheta^{\top}\btheta
  \\&=\tfrac12\,\btheta^{\top}M\btheta-y^{\top}A\btheta+\tfrac12\|y\|^2,\qquad M=A^{\top}A+\lambda I\\
  &=\tfrac12\,(\btheta-\btheta^*)^{\top}M\,(\btheta-\btheta^*)+E_{\min},\qquad
  \btheta^*=M^{-1}A^{\top}y,\quad E_{\min}=\tfrac12\|y\|^2-\tfrac12\,y^{\top}AM^{-1}A^{\top}y,
  \label{eq:ridge-square}
\end{align}
where the third line follows from $\tfrac12(\btheta-\btheta^*)^{\top}M(\btheta-\btheta^*)=\tfrac12\btheta^{\top}M\btheta-\btheta^{\top}M\btheta^*+\tfrac12\btheta^{*\top}M\btheta^*$ and $M\btheta^*=A^{\top}y$. The matrix $M$ is symmetric positive definite for $\lambda>0$, so the loss has its minimum $E_{\min}=L_\lambda(\btheta^*)$ at $\btheta^*$, the ridge regression fit, and each contour $L_\lambda=E$ is the ellipsoid $(\btheta-\btheta^*)^{\top}M(\btheta-\btheta^*)=2(E-E_{\min})$. Any $C^\omega$ loss\footnote{We require that the loss function be real analytic ($C^\omega$) and not simply smoothly differentiable ($C^\infty$) to avoid pathologic functions such as $e^{-1/x}$, whose Taylor series does not converge to the function itself at $x=0$.} has this form to leading order near a minimum $\btheta^*$, with the Hessian $H$ in place of $M$\footnote{By definition, at a critical point of a function, its gradient is the zero vector $\mathbf{0}$, and the point curvature is positive} (Fig.~\ref{fig:landscape}). \begin{figure}[!t]
  \centering
  \includegraphics[width=\textwidth]{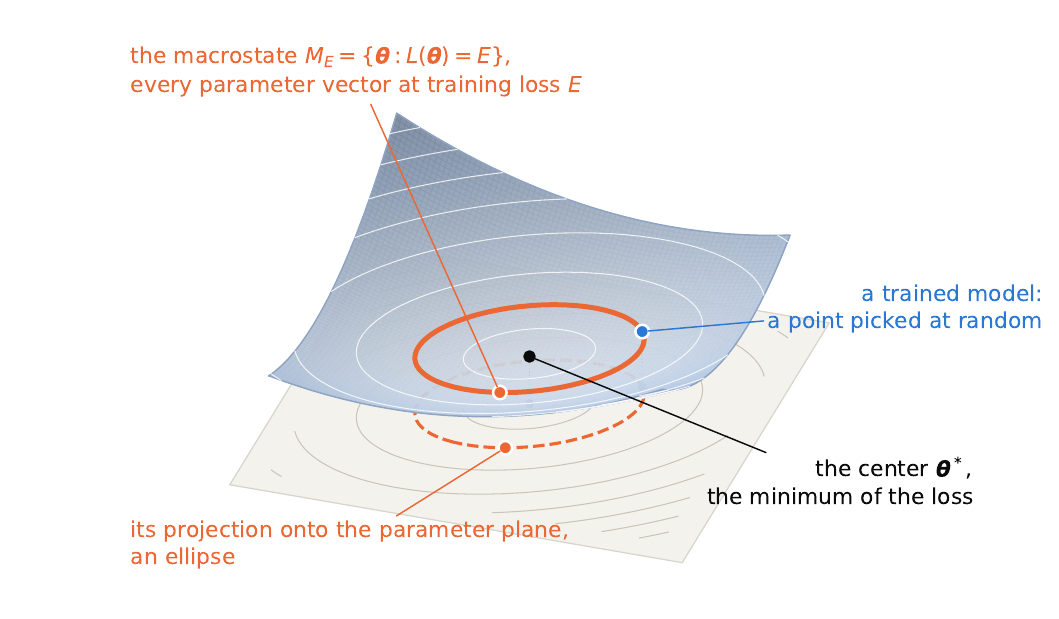}
  \caption{The fundamental postulate. The training loss $L(\btheta)$ as a landscape over two parameters $\btheta=(\theta_1,\theta_2)$, with the contour $L(\btheta)=E$ drawn on the surface (orange): it is the macrostate $M_E$, every parameter vector at training loss $E$, and its projection onto the parameter plane is an ellipse (dashed). By the fundamental postulate, a converged training run limited to a particular fixed energy visits every point of the contour of that energy equally often, so a trained model at that energy is a point picked at random from it (blue). The center $\btheta^*$ (black) is the local minimum.}
  \label{fig:landscape}
\end{figure}
\par A key consequence of the Boltzmann distribution is \textbf{the equipartition theorem}. Near a local energy minimum, the loss is quadratic, and under the Boltzmann weight each of the $d$ principal directions carries the same share $T/2$ of the loss above the minimum (Appendix~\ref{app:equipartition}), so
\begin{equation}
  \mathbb E[L]=E_{\min}+\frac{Td}{2}.
  \label{eq:equipartition}
\end{equation}
\par Near the minimum, $L=E_{\min}+\tfrac12\|w\|^2$ in the coordinates $w=H^{1/2}(\btheta-\btheta^*)$ (Eq.~\eqref{eq:ridge-square} for least squares, with $H=M$), and the contour at $E=E_{\min}+\Delta E$ is the sphere $\|w\|^2=2\Delta E$, on which the fundamental postulate makes $w$ uniform. A uniform point on a sphere has $\mathbb E[w_i^2]=2\Delta E/d$ for every coordinate, so each of the $d$ directions carries the same share $\Delta E/d$ of the excess loss. For least squares, the data term $\tfrac12\|A\btheta-y\|^2$ changes only along the $n$ directions of the row space of $AM^{-1/2}$, which we call seen by the data, and not along the $d-n$ directions of its null space, which we call unseen. The seen directions, which alone determine the fit at the training inputs, therefore carry
\begin{equation}
  \mathbb E_E\Big[\tfrac12\|w_s\|^2\Big]=\frac{n}{d}\,\Delta E,
  \label{eq:heatbath}
\end{equation}
which falls as $1/d$. At a fixed training loss $E=E_{\min}+\Delta E$, additional parameters beyond the interpolant $d>n$ therefore result in cooling of the temperature of the training algorithm along the seen directions.

\subsection{The path integral over curves}
\label{sec:pathintegral}
Read as a statement about curves rather than parameters, Eq.~\eqref{eq:likelihood} is a path integral~\citep{feynman1965quantum}. The model maps each parameter vector to a curve $f(\cdot;\btheta)$, so the sum over microstates is a sum over every curve the model can draw, each weighted by $e^{-L_\lambda/T}$, and a prediction is the average over all of them,
\begin{equation}
  Z=\int d\btheta\;e^{-L(\btheta)/T}\,e^{-\lambda\|\btheta\|^2/2T},\qquad
  \langle f(x)\rangle=\frac1Z\int d\btheta\;f(x;\btheta)\,e^{-L(\btheta)/T}\,e^{-\lambda\|\btheta\|^2/2T}.
  \label{eq:pathintegral}
\end{equation}
Under Eq.~\eqref{eq:heatbath} the temperature is $T_{\rm eff}\propto\Delta E/d$, and the variance of the coefficients becomes $2\Delta E/(\lambda d)$, shrinking with every added parameter. The \textbf{principle of least action} states that when $T\rightarrow 0$, the primary contribution to $Z$ becomes that of the \textbf{stationary path}, where $L_\lambda(\btheta)$ is minimized, and thus the exponential is maximized~(\cite{feynman1965quantum}).
\par In general, the Boltzmann distribution is equivalent to the overdamped Langevin equation
\begin{equation}
  \dot\btheta=-\nabla L(\btheta)-\lambda\btheta+\sqrt{2T}\,\xi(t),
  \label{eq:langevin}
\end{equation}
with $\xi$ white noise of unit strength: gradient descent on the loss with weight decay, driven by isotropic noise of temperature $T$, the stochastic gradient Langevin dynamics of \citet{welling2011bayesian}. The density $P(\btheta,t)$ of a cloud of runs obeys the Fokker--Planck equation~\citep{risken1996fokker}, in which the drift $-\nabla L_\lambda$ transports probability and the noise diffuses it with coefficient $T$,
\begin{equation}
  \partial_tP=\nabla\cdot\big(P\,\nabla L_\lambda\big)+T\,\nabla^2P=\nabla\cdot J,\qquad J=P\,\nabla L_\lambda+T\,\nabla P .
  \label{eq:fokker-planck}
\end{equation}
A stationary density with no probability current, $J=0$, satisfies $T\,\nabla P=-P\,\nabla L_\lambda$, that is
\begin{equation}
  \nabla\ln P=-\frac{\nabla L_\lambda}{T},\qquad\text{so}\qquad P(\btheta)=\frac1Z\,e^{-L_\lambda(\btheta)/T},
  \label{eq:stationary}
\end{equation}
which is Eq.~\eqref{eq:likelihood}, and it is normalizable exactly when $Z$ of Eq.~\eqref{eq:pathintegral} converges: for $d>n$ this requires $\lambda>0$, since without the weight decay the loss is flat along the $d-n$ unseen directions.
\par For a quadratic loss, $e^{-L_\lambda/T}=e^{-E_{\min}/T}\exp[-\tfrac12(\btheta-\btheta^*)^{\top}(H/T)(\btheta-\btheta^*)]$ is the Gaussian of mean $\btheta^*$ and covariance $TH^{-1}$, with $Z=(2\pi T)^{d/2}(\det H)^{-1/2}e^{-E_{\min}/T}$. For a model linear in its parameters, $f(x)=p(x)^{\top}\btheta$ with $p(x)$ the vector of basis functions at $x$, the curve is a Gaussian process with mean $p(x)^{\top}\btheta^*$ and covariance $T\,p(x)^{\top}H^{-1}p(x')$~\citep{rasmussen2006gaussian}. For least squares with weight decay, $H=M$ and the mean curve is the ridge regression fit $f^*(x)=p(x)^{\top}\btheta^*$.
Its two limits are the two ends of the path integral. As $T\to0$ the noise vanishes and the parameters descend to the stationary point $\nabla L(\btheta)+\lambda\btheta=0$. For least squares, $L(\btheta)=\tfrac12\|A\btheta-y\|^2=\tfrac12(A\btheta-y)^{\top}(A\btheta-y)$ has gradient $\nabla L(\btheta)=A^{\top}(A\btheta-y)$, so the stationary point solves
\begin{equation}
  A^{\top}(A\btheta-y)+\lambda\btheta=0,\qquad\text{that is}\qquad(A^{\top}A+\lambda I)\,\btheta=A^{\top}y,\qquad\btheta=\btheta^*=M^{-1}A^{\top}y,
  \label{eq:stationary-point}
\end{equation}
mirroring Eq.~\eqref{eq:ridge-square}. It is a minimum because the Hessian $M=A^{\top}A+\lambda I$ is positive definite for $\lambda>0$.
\par As $T\to\infty$, we perform a change of variables $\btheta=\sqrt{T/\lambda}\,\boldsymbol\eta$; for least squares $\nabla L(\btheta)=A^{\top}A\btheta-A^{\top}y$, and Eq.~\eqref{eq:langevin} becomes
\begin{equation}
  \dot{\boldsymbol\eta}=-(A^{\top}A+\lambda I)\boldsymbol\eta+\sqrt{\lambda/T}\,A^{\top}y+\sqrt{2\lambda}\,\xi(t)\;\xrightarrow[T\to\infty]{}\;-M\boldsymbol\eta+\sqrt{2\lambda}\,\xi(t).
  \label{eq:langevin-hot}
\end{equation}
\par The $T\rightarrow \infty$ limit Eq.~\eqref{eq:langevin-hot} is the Ornstein--Uhlenbeck process~\citep{uhlenbeck1930theory}, resulting in the motion $\btheta(t)-\btheta^*=e^{-Ht}(\btheta_0-\btheta^*)+\sqrt{2T}\int_0^te^{-H(t-s)}\,dW_s$. This solution is Gaussian at every $t$, with mean $\btheta^*+e^{-Ht}(\btheta_0-\btheta^*)$ and covariance $2T\int_0^te^{-2Hv}dv=TH^{-1}(I-e^{-2Ht})$, which tends to the stationary law as $t\to\infty$~\citep[Chapter~4]{pavliotis2014stochastic}.
For the value at $x$, with $\tilde p_i(x)$ the components of $p(x)$ along the eigenvectors of $M$ and $\mu_i$ its eigenvalues,
\begin{equation}
  \langle f(x)^2\rangle_t=\langle f(x)\rangle_t^2+T\sum_i\tilde p_i(x)^2\,\frac{1-e^{-2\mu_it}}{\mu_i},\qquad
  \langle f(x)\rangle_t=f^*(x)+p(x)^{\top}e^{-Mt}(\btheta_0-\btheta^*).
  \label{eq:ou-second-moment}
\end{equation}
At stationarity $\langle f(x)^2\rangle=f^*(x)^2+T\,p(x)^{\top}M^{-1}p(x)$. At a finite time the unseen directions, $\mu_i=\lambda$, have reached only $T(1-e^{-2\lambda t})/\lambda\approx2Tt$ of their stationary variance $T/\lambda$: the run behaves as if the stiffness were capped at $1/(2t)$, the $\lambda\propto1/t$ of Eq.~\eqref{eq:ridge-loss}.
\begin{theorem}[Finite training time is weight decay, Appendix~\ref{app:proofs}]\label{thm:time}
For least squares with weight decay $\lambda\ge0$, the run of Eq.~\eqref{eq:langevin} from $\btheta_0=0$ has, at time $t$,
\[
  \mathrm{Cov}[\btheta(t)]\;\preceq\;2T\,\big(M+\tfrac{1}{2t}I\big)^{-1},\qquad
  \mathbb E[\btheta(t)]=(I-e^{-Mt})M^{-1}A^{\top}y,
\]
and the mean agrees with the ridge fit $(M+\tfrac1tI)^{-1}A^{\top}y$ at weight decay $\lambda+1/t$ along every eigenvector of $M$ up to a factor between $1$ and $1.3$.
\end{theorem}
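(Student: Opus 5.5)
The plan is to solve the linear SDE exactly and then reduce every claim to a scalar inequality along each eigenvector of $M$. With $\lambda\ge0$ and $\btheta_0=0$, Eq.~\eqref{eq:langevin} for least squares reads $d\btheta=-(M\btheta-A^{\top}y)\,dt+\sqrt{2T}\,dW_t$. By the variation-of-constants formula already used for the Ornstein--Uhlenbeck process above,
\[
\btheta(t)=\int_0^te^{-M(t-s)}A^{\top}y\,ds+\sqrt{2T}\int_0^te^{-M(t-s)}\,dW_s .
\]
This gives the mean $\mathbb E[\btheta(t)]=(I-e^{-Mt})M^{-1}A^{\top}y$, which is the second claim. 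For $\lambda=0$ and a zero eigenvalue, $(1-e^{-\mu t})/\mu$ is read as its limit $t$. By the Itô isometry the covariance is
\[
\mathrm{Cov}[\btheta(t)]=2T\int_0^te^{-2Mv}\,dv=TM^{-1}(I-e^{-2Mt}).
\]
Every matrix involved is a function of $M$, so all of them are simultaneously diagonal in the eigenbasis $\{\mu_i\}$ of $M$. The Loewner inequality therefore reduces to a scalar inequality for each $\mu=\mu_i\ge0$.

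\textbf{Covariance bound.} Along an eigenvector we need $T(1-e^{-2\mu t})/\mu\le 2T/(\mu+\tfrac1{2t})$. Setting $s=2\mu t\ge0$, both sides equal $2Tt$ times a function of $s$, and the inequality becomes $(1-e^{-s})/s\le 2/(1+s)$, that is, $(1+s)(1-e^{-s})\le 2s$. Split the left side as $(1-e^{-s})+s(1-e^{-s})$. Then use $1-e^{-s}\le s$ on the first term and $1-e^{-s}\le1$ on the second. The case $\mu=0$ is the limit $2Tt\le4Tt$.

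\textbf{Mean versus ridge.} Let $b$ be the component of $A^{\top}y$ along the eigenvector. The time-$t$ mean is $b(1-e^{-\mu t})/\mu$, and the ridge fit at weight decay $\lambda+1/t$ gives $b/(\mu+1/t)$. With $u=\mu t$, their ratio is
\[
r(u)=\frac{(1-e^{-u})(1+u)}{u}.
\]
The lower bound $r\ge1$ is equivalent to $(1+u)(1-e^{-u})\ge u$, i.e. $e^{u}\ge1+u$. Also $r(u)\to1$ both as $u\to0$ and as $u\to\infty$.

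\textbf{Main obstacle: $r\le1.3$.} The upper bound is the only non-routine step, because the true maximum is about $1.298$ near $u\approx1.8$, so the margin is tiny. My first approach is calculus. The equation $r'(u)=0$ simplifies to $e^{u}=1+u+u^2$, which has a unique positive root $u^*\in(1.7,1.9)$ since $e^u-1-u-u^2$ is convex for $u>\ln 2$ and changes sign there. At the critical point $e^{-u^*}=1/(1+u^*+u^{*2})$, so the maximum equals
\[
r(u^*)=\frac{u^*(1+u^*)^2}{u^*\,(1+u^*+u^{*2})}=\frac{(1+u^*)^2}{1+u^*+u^{*2}}.
\]
This expression is decreasing in $u^*$ for $u^*>1$, so bounding it at $u^*=1.7$ gives $7.29/5.59\approx1.304$. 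That is slightly too weak. Tightening the bracket to $u^*\ge1.79$ (checking the sign of $e^{u}-1-u-u^2$ at $1.79$) gives $\approx1.298<1.3$.
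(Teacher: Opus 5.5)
Your proof is correct and follows the paper's route. You solve the Ornstein--Uhlenbeck equation from $0$, diagonalize in the eigenbasis of $M$, and reduce everything to $(1+s)(1-e^{-s})\le 2s$ and $1\le r(u)\le 1.3$.

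There are two small differences, both in your favour:
\begin{itemize}
\item Your split $(1-e^{-s})+s(1-e^{-s})\le s+s$ avoids the paper's case split at $s=1$.
\item For the constant $1.3$, the paper only asserts the numerical maximum at $u\approx1.79$. Your identity $r(u^*)=(1+u^*)^2/(1+u^*+u^{*2})$, together with its monotonicity for $u^*>1$ and the bracket $u^*>1.79$, actually certifies it.
\end{itemize}

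The one step to tighten is the claim that $u^*$ is the only positive root of $h(u)=e^u-1-u-u^2$. Convexity for $u>\ln 2$ does not exclude roots in $(0,\ln 2]$. This is not cosmetic, because $(1+v)^2/(1+v+v^2)$ equals $4/3>1.3$ at $v=1$, so a critical point below $1.79$ would break your bound. Add two facts. First, $h$ is strictly concave on $[0,\ln 2]$ with $h(0)=h'(0)=0$, hence negative on $(0,\ln 2]$. Second, $h$ is negative on $[\ln 2,1.79]$ by convexity, since it is negative at both endpoints.

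Then $r'(u)=-e^{-u}h(u)/u^2$ shows that $r$ rises to its unique critical point $u^*>1.79$ and falls afterwards. Finally, the margin at $1.79$ is thin: $1+1.79+1.79^2=5.9941$ against $e^{1.79}\approx 5.9895$. So state an explicit bound such as $e^{1.79}<5.99$ rather than a rounded value.
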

\noindent By Theorem~\ref{thm:time}, a run of length $t$ has the fluctuations of a run with weight decay $\lambda+1/(2t)$, up to a factor two, and the mean fit of a run with weight decay $\lambda+1/t$, up to a factor $1.3$, in agreement with the continuous-time analysis of early stopping by \citet{ali2019continuous}; the finite training time caps the size of the fit whatever the conditioning of the data. Late stopping removes the cap and leaves the ridge fit at the explicit $\lambda$, whose quality depends on the conditioning of the data: the collapse of $\sigma_{\min}(A)$ at $d=n$ in Fig.~\ref{fig:main}\panel{G} and its recovery beyond.
\subsection{Measuring the size of the fitted functions}
\par The Legendre polynomials are orthogonal, $\int_{-1}^{1}P_kP_l\,dx=\frac{2}{2k+1}\delta_{kl}$~\citep[Eq.~(4.3.3) with $\alpha=\beta=0$]{szego1975orthogonal}, so that
\begin{equation}
  \int_{-1}^{1}f^2\,dx=\sum_{k<d}\frac{2}{2k+1}\,\theta_k^2\;\le\;2\|\btheta\|^2,
  \label{eq:parseval}
\end{equation}
therefore the least-action curve through the data is the smallest one in the $L^2$ sense. For $d\ge n$ and distinct inputs, $AA^{\top}$ is invertible and as $\lambda\to0$ the least-action curve is the interpolant of least norm, $\btheta^*=A^{\top}(AA^{\top})^{-1}y$ (Theorem~\ref{thm:regimes} in Appendix~\ref{app:stability}), whose energy $\tfrac12\|\btheta^*\|^2=\tfrac12y^{\top}(AA^{\top})^{-1}y$ changes with every added coefficient. We now show that this energy is nonincreasing.
\begin{theorem}[The energy of the interpolant, Appendix~\ref{app:proofs}]\label{thm:norm}
Let $d\ge n$, let $p_d=(P_d(x_i))_{i\le n}$ be the values at the inputs of the basis function added when $d$ becomes $d+1$, and let $\alpha_d=(AA^{\top})^{-1}y$, so that $\btheta^*=A^{\top}\alpha_d$. Then
\begin{equation}
  \|\btheta^{*(d+1)}\|^2=\|\btheta^{*(d)}\|^2-\frac{(p_d^{\top}\alpha_d)^2}{1+p_d^{\top}(AA^{\top})^{-1}p_d}:
  \label{eq:norm-drop}
\end{equation}
the energy of the interpolant is nonincreasing in $d$, and strictly decreasing unless $p_d^{\top}\alpha_d=0$, which for given inputs holds only on a hyperplane of labels $y$.
\end{theorem}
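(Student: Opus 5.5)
The proof is a rank-one update of the Gram matrix followed by Sherman--Morrison. Write $G_d=AA^{\top}\in\R^{n\times n}$ for the $d$-parameter fitting matrix $A$. It is invertible for $d\ge n$ with distinct inputs, as in Theorem~\ref{thm:regimes}. Adding the basis function $P_d$ appends the column $p_d$ to $A$, so
\[
G_{d+1}=A A^{\top}+p_dp_d^{\top}=G_d+p_dp_d^{\top}.
\]
The minimum-norm interpolant has squared norm $\|\btheta^{*(d)}\|^2=y^{\top}A^{\top}$-free form, namely
\[
\|\btheta^{*(d)}\|^2=\alpha_d^{\top}AA^{\top}\alpha_d=y^{\top}G_d^{-1}y .
\]
The whole statement therefore reduces to comparing $y^{\top}G_{d+1}^{-1}y$ with $y^{\top}G_d^{-1}y$.

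Next apply Sherman--Morrison. Since $G_d$ is symmetric positive definite, $1+p_d^{\top}G_d^{-1}p_d\ge 1>0$, and
\[
G_{d+1}^{-1}=G_d^{-1}-\frac{G_d^{-1}p_dp_d^{\top}G_d^{-1}}{1+p_d^{\top}G_d^{-1}p_d}.
\]
Sandwich this between $y^{\top}$ and $y$, using the symmetry of $G_d^{-1}$ to write $y^{\top}G_d^{-1}p_d=p_d^{\top}\alpha_d$. This gives exactly Eq.~\eqref{eq:norm-drop}.

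Monotonicity follows at once. The subtracted term has a squared numerator and a denominator of at least $1$, so it is nonnegative, and it vanishes iff $p_d^{\top}\alpha_d=0$. For the genericity claim, fix the inputs. Then $p_d^{\top}\alpha_d=(G_d^{-1}p_d)^{\top}y$ is a linear functional of $y$, with coefficient vector $G_d^{-1}p_d$. Its zero set is a hyperplane whenever $p_d\neq 0$. If $p_d=0$, i.e.\ all inputs are roots of $P_d$, the zero set is all of $\R^n$. I would either note that degenerate case or exclude it, observing that $P_d$ has only $d$ roots, so it cannot vanish at every input generically. The "only on a hyperplane" phrasing implicitly assumes $p_d\ne0$.

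The main obstacle is mostly bookkeeping: checking that the new interpolant really is $A_{d+1}^{\top}G_{d+1}^{-1}y$ with $A_{d+1}=[A\;p_d]$, so that its norm is $y^{\top}G_{d+1}^{-1}y$. This is the minimum-norm characterization from Theorem~\ref{thm:regimes}. One must also check that the column ordering and the $\tfrac12$ energy convention are consistent; the factor cancels on both sides.
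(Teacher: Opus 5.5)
Your proposal is correct and matches the paper's proof: you write $\|\btheta^{*(d)}\|^2=y^{\top}K^{-1}y$ with $K=AA^{\top}$, apply Sherman--Morrison to the rank-one update $K+p_dp_d^{\top}$, and read off nonnegativity and the hyperplane condition with coefficient vector $K^{-1}p_d$, nonzero whenever $p_d\neq0$. One small caution: since $d\ge n$, the fact that $P_d$ has only $d$ roots does not by itself rule out $p_d=0$, because all $n$ inputs could be roots of $P_d$; that case is excluded only as nongeneric, which is exactly what the paper's caveat $p_d\neq0$ does (also tidy up the garbled phrase ``$y^{\top}A^{\top}$-free form'').
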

\noindent Thus, each new basis function with nonzero overlap with the training data lowers the energy.
\section{Discussion}
In this work, we have described a theoretical basis for how overparametrization past the interpolation threshold leads to increased regularization of the optimization problem. Schematically, there is leakage of energy into the zero modes of higher dimensions, assuming that training has reached thermal equilibrium. Interpreting training as a functional Eq.~\eqref{eq:pathintegral} provides physical intuition for what these extra dimensions look like. Note that while we examined 1D Legendre polynomials here, the same discussions presumably apply to higher dimensional data in the language of partial differential equations and representation theory. The Legendre polynomials are the eigenfunctions of a Sturm--Liouville problem, $\frac{d}{dx}[(1-x^2)P_k']=-k(k+1)P_k$, and Sturm--Liouville theory is what makes them a complete orthogonal basis~\citep[Chapters~V and~VII]{courant1953methods}. Higher dimensional analogues, such as the spherical harmonics as eigenfunctions of the Laplacian on the sphere, form a complete orthogonal basis in the same way, and thus may be applicable to higher dimensional optimization problems~\citep{atkinson2012spherical}. However, the analogy of model training with particle motion remains unchanged. Fortunately, a large and rigorous corpus of knowledge exists in statistical mechanics, operator theory, and beyond to crystallize this analogy.
\par We have not directly shown that the phenomena discussed decrease the test loss. However, it is well known that regularization during training decreases the test loss and improves the ability of the model to learn the underlying data~\citep{krogh1992simple,pascanu2013difficulty,srivastava2014dropout,ioffe2015batch,ba2016layer,he2016deep}. The idea that solutions which generalize well have low complexity as measured by sharpness, curvature, and parameter norms is an expression of the \textbf{manifold hypothesis}, that real world data lie along low-dimensional smooth manifolds~\citep{fefferman2016testing}. Double descent appears to be another tool for taming the wilderness of possible functions.
\par There are a few interesting observations we would like to make. First, the loss landscape of neural networks is riddled with local minima. These minima exhibit discrete symmetries (for example, permuting the neurons) and continuous symmetries (for example, a change of basis for the linear output of a layer of neurons). The topic of spontaneously broken symmetries appears in physics and is fundamental to our understanding of how some particles acquire mass~\citep{englert1964broken,higgs1964broken}, which is similar to weight regularization, in which a ``mass'' term $\lambda \|\theta\|^2$ breaks the continuous symmetry present in the overinterpolated regime so that new minima form. In the case of model training, we explicitly add this term, whereas in particle physics, the universe spontaneously breaks the symmetry by picking a vacuum state.
\par Second, if we take Fig.~\ref{fig:landscape} literally, at thermal equilibrium around a local minimum, the parameters are sampled in an ellipsoid around that minimum. By mirror symmetry about the principal axes of that ellipsoid, the picture predicts that $\langle\btheta\rangle=\btheta^*$. Exponential moving averages of parameters~\citep{morales2024exponential}, and iterate averaging before them~\citep{polyak1992acceleration}, exploit this behavior, discounting the far past of the training trajectory, while weighting recently visited states near equally, which may explain why these models generalize well, and why more general mixture-of-experts models~\citep{jacobs1991adaptive,shazeer2017outrageously} exhibit similar improvements.
\par Finally, diffusion and flow matching models~\citep{holderrieth2025introduction} clearly fit in the same physical framework as model training, with discussion of Brownian motion and the It\^o calculus. It may be worthwhile to translate findings between these fields of research.
\section{Conclusion}
The training trajectory produced by stochastic gradient-based methods can be thought of as a particle wandering over the energy landscape with some induced temperature $T$, and so can be treated with statistical mechanics. Because real world machine learning algorithms start at some initial point and have only finite time to diffuse in the energy landscape, there is effectively weight decay regularization. By increasing the degrees of freedom in this landscape, the equipartition theorem distributes energy among the quadratic coordinates in multiples of $\frac T2$. Increasing the number of degrees of freedom $d$ (which are made quadratic due to weight decay regularization) thus effectively decreases the temperature for a given energy $E$, driving the Boltzmann distribution toward the stationary path. Furthermore, increasing $d$ can only lower the energy of the stationary path (Theorem~\ref{thm:norm}) and thus the resulting $L^2$ norm Eq.~\eqref{eq:parseval}, therefore reducing the likelihood at fixed $E$ that the sampled solution has large $L^2$ norm. Thus, increasing $d$ beyond the interpolation threshold is a form of weight decay regularization as well.
\section*{AI disclosure}
The author used Claude (Anthropic; Claude Opus 5 through Claude Code) as a research and writing assistant. Under the author's direction it drafted and revised text, derived and typeset the theorems and their proofs, wrote the Lean~4 verification of the theorems, wrote the Python scripts that produce every figure, and ran the numerical experiments. The author's initial idea was to explore double descent through the lens of statistical mechanics, in particular to bound the norms of solutions near local minima, as a function of $d$. The author noted that there is symmetry among minima for neural networks by permuting neurons in a hidden layer. From there, the author was reminded of the spontaneous symmetry breaking in physics, of the path integral, and finally of the connection between the path integral and the Boltzmann distribution.
\subsubsection*{Conflict of interest statement}
The author is a 2026--2027 Doximity AI Fellow, and receives items and services of minimal monetary value from Doximity, Inc in exchange for consulting services.
\bibliography{refs}

\begin{thebibliography}{39}
\providecommand{\natexlab}[1]{#1}
\providecommand{\url}[1]{\texttt{#1}}
\expandafter\ifx\csname urlstyle\endcsname\relax
  \providecommand{\doi}[1]{doi: #1}\else
  \providecommand{\doi}{doi: \begingroup \urlstyle{rm}\Url}\fi

\bibitem[Ali et~al.(2019)Ali, Kolter, and Tibshirani]{ali2019continuous}
Alnur Ali, J~Zico Kolter, and Ryan~J Tibshirani.
\newblock A continuous-time view of early stopping for least squares
  regression.
\newblock In \emph{Proceedings of the 22nd International Conference on
  Artificial Intelligence and Statistics (AISTATS)}, volume~89 of
  \emph{Proceedings of Machine Learning Research}, pp.\  1370--1378, 2019.
\newblock URL \url{https://proceedings.mlr.press/v89/ali19a.html}.

\bibitem[Atkinson \& Han(2012)Atkinson and Han]{atkinson2012spherical}
Kendall Atkinson and Weimin Han.
\newblock \emph{Spherical Harmonics and Approximations on the Unit Sphere: An
  Introduction}, volume 2044 of \emph{Lecture Notes in Mathematics}.
\newblock Springer, Berlin, 2012.
\newblock \doi{10.1007/978-3-642-25983-8}.

\bibitem[Ba et~al.(2016)Ba, Kiros, and Hinton]{ba2016layer}
Jimmy~Lei Ba, Jamie~Ryan Kiros, and Geoffrey~E Hinton.
\newblock Layer normalization, 2016.
\newblock \href{https://arxiv.org/abs/1607.06450}{arXiv:1607.06450}.

\bibitem[Bartlett et~al.(2020)Bartlett, Long, Lugosi, and
  Tsigler]{bartlett2020benign}
Peter~L Bartlett, Philip~M Long, G{\'a}bor Lugosi, and Alexander Tsigler.
\newblock Benign overfitting in linear regression.
\newblock \emph{Proceedings of the National Academy of Sciences}, 117\penalty0
  (48):\penalty0 30063--30070, 2020.
\newblock \doi{10.1073/pnas.1907378117}.

\bibitem[Belkin et~al.(2019)Belkin, Hsu, Ma, and Mandal]{belkin2019reconciling}
Mikhail Belkin, Daniel Hsu, Siyuan Ma, and Soumik Mandal.
\newblock Reconciling modern machine-learning practice and the classical
  bias--variance trade-off.
\newblock \emph{Proceedings of the National Academy of Sciences}, 116\penalty0
  (32):\penalty0 15849--15854, 2019.
\newblock \doi{10.1073/pnas.1903070116}.

\bibitem[Courant \& Hilbert(1953)Courant and Hilbert]{courant1953methods}
Richard Courant and David Hilbert.
\newblock \emph{Methods of Mathematical Physics}, volume~1.
\newblock Interscience, New York, 1953.
\newblock \doi{10.1002/9783527617210}.

\bibitem[Englert \& Brout(1964)Englert and Brout]{englert1964broken}
Fran\c{c}ois Englert and Robert Brout.
\newblock Broken symmetry and the mass of gauge vector mesons.
\newblock \emph{Physical Review Letters}, 13\penalty0 (9):\penalty0 321--323,
  1964.
\newblock \doi{10.1103/PhysRevLett.13.321}.

\bibitem[Fefferman et~al.(2016)Fefferman, Mitter, and
  Narayanan]{fefferman2016testing}
Charles Fefferman, Sanjoy Mitter, and Hariharan Narayanan.
\newblock Testing the manifold hypothesis.
\newblock \emph{Journal of the American Mathematical Society}, 29\penalty0
  (4):\penalty0 983--1049, 2016.
\newblock \doi{10.1090/jams/852}.

\bibitem[Feynman \& Hibbs(1965)Feynman and Hibbs]{feynman1965quantum}
Richard~P Feynman and Albert~R Hibbs.
\newblock \emph{Quantum Mechanics and Path Integrals}.
\newblock McGraw-Hill, New York, 1965.

\bibitem[Geiger et~al.(2020)Geiger, Jacot, Spigler, Gabriel, Sagun, d'Ascoli,
  Biroli, Hongler, and Wyart]{geiger2020scaling}
Mario Geiger, Arthur Jacot, Stefano Spigler, Franck Gabriel, Levent Sagun,
  St{\'e}phane d'Ascoli, Giulio Biroli, Cl{\'e}ment Hongler, and Matthieu
  Wyart.
\newblock Scaling description of generalization with number of parameters in
  deep learning.
\newblock \emph{Journal of Statistical Mechanics: Theory and Experiment},
  2020\penalty0 (2):\penalty0 023401, 2020.
\newblock \doi{10.1088/1742-5468/ab633c}.

\bibitem[Hastie et~al.(2022)Hastie, Montanari, Rosset, and
  Tibshirani]{hastie2022surprises}
Trevor Hastie, Andrea Montanari, Saharon Rosset, and Ryan~J Tibshirani.
\newblock Surprises in high-dimensional ridgeless least squares interpolation.
\newblock \emph{Annals of Statistics}, 50\penalty0 (2):\penalty0 949--986,
  2022.
\newblock \doi{10.1214/21-AOS2133}.

\bibitem[He et~al.(2016)He, Zhang, Ren, and Sun]{he2016deep}
Kaiming He, Xiangyu Zhang, Shaoqing Ren, and Jian Sun.
\newblock Deep residual learning for image recognition.
\newblock In \emph{IEEE Conference on Computer Vision and Pattern Recognition
  (CVPR)}, pp.\  770--778, 2016.
\newblock \doi{10.1109/CVPR.2016.90}.

\bibitem[Higgs(1964)]{higgs1964broken}
Peter~W Higgs.
\newblock Broken symmetries and the masses of gauge bosons.
\newblock \emph{Physical Review Letters}, 13\penalty0 (16):\penalty0 508--509,
  1964.
\newblock \doi{10.1103/PhysRevLett.13.508}.

\bibitem[Holderrieth \& Erives(2025)Holderrieth and
  Erives]{holderrieth2025introduction}
Peter Holderrieth and Ezra Erives.
\newblock An introduction to flow matching and diffusion models.
\newblock Lecture notes for MIT 6.S184, Generative AI with Stochastic
  Differential Equations, \url{https://diffusion.csail.mit.edu/}, 2025.
\newblock \href{https://arxiv.org/abs/2506.02070}{arXiv:2506.02070}.

\bibitem[Ioffe \& Szegedy(2015)Ioffe and Szegedy]{ioffe2015batch}
Sergey Ioffe and Christian Szegedy.
\newblock Batch normalization: Accelerating deep network training by reducing
  internal covariate shift.
\newblock In \emph{Proceedings of the 32nd International Conference on Machine
  Learning (ICML)}, volume~37 of \emph{Proceedings of Machine Learning
  Research}, pp.\  448--456, 2015.
\newblock URL \url{https://proceedings.mlr.press/v37/ioffe15.html}.

\bibitem[Jacobs et~al.(1991)Jacobs, Jordan, Nowlan, and
  Hinton]{jacobs1991adaptive}
Robert~A Jacobs, Michael~I Jordan, Steven~J Nowlan, and Geoffrey~E Hinton.
\newblock Adaptive mixtures of local experts.
\newblock \emph{Neural Computation}, 3\penalty0 (1):\penalty0 79--87, 1991.
\newblock \doi{10.1162/neco.1991.3.1.79}.

\bibitem[Kingma \& Ba(2015)Kingma and Ba]{kingma2015adam}
Diederik~P Kingma and Jimmy Ba.
\newblock Adam: A method for stochastic optimization.
\newblock In \emph{International Conference on Learning Representations
  (ICLR)}, 2015.
\newblock \href{https://arxiv.org/abs/1412.6980}{arXiv:1412.6980}.

\bibitem[Krogh \& Hertz(1992)Krogh and Hertz]{krogh1992simple}
Anders Krogh and John~A Hertz.
\newblock A simple weight decay can improve generalization.
\newblock In \emph{Advances in Neural Information Processing Systems},
  volume~4, pp.\  950--957. Morgan Kaufmann, 1992.
\newblock URL
  \url{https://proceedings.neurips.cc/paper/1991/hash/8eefcfdf5990e441f0fb6f3fad709e21-Abstract.html}.

\bibitem[Li \& Goldenfeld(2026)Li and Goldenfeld]{li2026broken}
Chan Li and Nigel Goldenfeld.
\newblock Broken ergodicity and the violation of the fluctuation-dissipation
  theorem lead to generalization beyond overfitting in machine learning, 2026.
\newblock \href{https://arxiv.org/abs/2607.04135}{arXiv:2607.04135}.

\bibitem[Loshchilov \& Hutter(2019)Loshchilov and
  Hutter]{loshchilov2019decoupled}
Ilya Loshchilov and Frank Hutter.
\newblock Decoupled weight decay regularization.
\newblock In \emph{International Conference on Learning Representations
  (ICLR)}, 2019.
\newblock \href{https://arxiv.org/abs/1711.05101}{arXiv:1711.05101}.

\bibitem[Mandt et~al.(2017)Mandt, Hoffman, and Blei]{mandt2017stochastic}
Stephan Mandt, Matthew~D Hoffman, and David~M Blei.
\newblock Stochastic gradient descent as approximate {B}ayesian inference.
\newblock \emph{Journal of Machine Learning Research}, 18\penalty0
  (134):\penalty0 1--35, 2017.

\bibitem[Mei \& Montanari(2022)Mei and Montanari]{mei2022generalization}
Song Mei and Andrea Montanari.
\newblock The generalization error of random features regression: Precise
  asymptotics and the double descent curve.
\newblock \emph{Communications on Pure and Applied Mathematics}, 75\penalty0
  (4):\penalty0 667--766, 2022.
\newblock \doi{10.1002/cpa.22008}.

\bibitem[Morales-Brotons et~al.(2024)Morales-Brotons, Vogels, and
  Hendrikx]{morales2024exponential}
Daniel Morales-Brotons, Thijs Vogels, and Hadrien Hendrikx.
\newblock Exponential moving average of weights in deep learning: Dynamics and
  benefits.
\newblock \emph{Transactions on Machine Learning Research}, 2024.
\newblock \href{https://arxiv.org/abs/2411.18704}{arXiv:2411.18704}.

\bibitem[Nakkiran et~al.(2019)Nakkiran, Kaplun, Bansal, Yang, Barak, and
  Sutskever]{nakkiran2019blog}
Preetum Nakkiran, Gal Kaplun, Yamini Bansal, Tristan Yang, Boaz Barak, and Ilya
  Sutskever.
\newblock Deep double descent.
\newblock Windows on Theory blog,
  \url{https://windowsontheory.org/2019/12/05/deep-double-descent/}, December
  2019.

\bibitem[Nakkiran et~al.(2020)Nakkiran, Kaplun, Bansal, Yang, Barak, and
  Sutskever]{nakkiran2020deep}
Preetum Nakkiran, Gal Kaplun, Yamini Bansal, Tristan Yang, Boaz Barak, and Ilya
  Sutskever.
\newblock Deep double descent: Where bigger models and more data hurt.
\newblock In \emph{International Conference on Learning Representations
  (ICLR)}, 2020.
\newblock \href{https://arxiv.org/abs/1912.02292}{arXiv:1912.02292}.

\bibitem[Pascanu et~al.(2013)Pascanu, Mikolov, and
  Bengio]{pascanu2013difficulty}
Razvan Pascanu, Tomas Mikolov, and Yoshua Bengio.
\newblock On the difficulty of training recurrent neural networks.
\newblock In \emph{Proceedings of the 30th International Conference on Machine
  Learning (ICML)}, volume~28 of \emph{Proceedings of Machine Learning
  Research}, pp.\  1310--1318, 2013.
\newblock URL \url{https://proceedings.mlr.press/v28/pascanu13.html}.

\bibitem[Pavliotis(2014)]{pavliotis2014stochastic}
Grigorios~A Pavliotis.
\newblock \emph{Stochastic Processes and Applications: Diffusion Processes, the
  {F}okker--{P}lanck and {L}angevin Equations}, volume~60 of \emph{Texts in
  Applied Mathematics}.
\newblock Springer, New York, 2014.
\newblock \doi{10.1007/978-1-4939-1323-7}.

\bibitem[Polyak \& Juditsky(1992)Polyak and Juditsky]{polyak1992acceleration}
Boris~T Polyak and Anatoli~B Juditsky.
\newblock Acceleration of stochastic approximation by averaging.
\newblock \emph{SIAM Journal on Control and Optimization}, 30\penalty0
  (4):\penalty0 838--855, 1992.
\newblock \doi{10.1137/0330046}.

\bibitem[Rasmussen \& Williams(2006)Rasmussen and
  Williams]{rasmussen2006gaussian}
Carl~Edward Rasmussen and Christopher K~I Williams.
\newblock \emph{Gaussian Processes for Machine Learning}.
\newblock MIT Press, 2006.
\newblock \doi{10.7551/mitpress/3206.001.0001}.

\bibitem[Reif(1965)]{reif1965fundamentals}
Frederick Reif.
\newblock \emph{Fundamentals of Statistical and Thermal Physics}.
\newblock McGraw-Hill, New York, 1965.

\bibitem[Risken(1996)]{risken1996fokker}
Hannes Risken.
\newblock \emph{The {F}okker--{P}lanck Equation: Methods of Solution and
  Applications}.
\newblock Springer, Berlin, 2nd edition, 1996.
\newblock \doi{10.1007/978-3-642-61544-3}.

\bibitem[Robbins \& Monro(1951)Robbins and Monro]{robbins1951stochastic}
Herbert Robbins and Sutton Monro.
\newblock A stochastic approximation method.
\newblock \emph{Annals of Mathematical Statistics}, 22\penalty0 (3):\penalty0
  400--407, 1951.
\newblock \doi{10.1214/aoms/1177729586}.

\bibitem[Shazeer et~al.(2017)Shazeer, Mirhoseini, Maziarz, Davis, Le, Hinton,
  and Dean]{shazeer2017outrageously}
Noam Shazeer, Azalia Mirhoseini, Krzysztof Maziarz, Andy Davis, Quoc Le,
  Geoffrey Hinton, and Jeff Dean.
\newblock Outrageously large neural networks: The sparsely-gated
  mixture-of-experts layer.
\newblock In \emph{International Conference on Learning Representations
  (ICLR)}, 2017.
\newblock \href{https://arxiv.org/abs/1701.06538}{arXiv:1701.06538}.

\bibitem[Sherman \& Morrison(1950)Sherman and Morrison]{sherman1950adjustment}
Jack Sherman and Winifred~J Morrison.
\newblock Adjustment of an inverse matrix corresponding to a change in one
  element of a given matrix.
\newblock \emph{The Annals of Mathematical Statistics}, 21\penalty0
  (1):\penalty0 124--127, 1950.
\newblock \doi{10.1214/aoms/1177729893}.

\bibitem[Spigler et~al.(2019)Spigler, Geiger, d'Ascoli, Sagun, Biroli, and
  Wyart]{spigler2019jamming}
Stefano Spigler, Mario Geiger, St{\'e}phane d'Ascoli, Levent Sagun, Giulio
  Biroli, and Matthieu Wyart.
\newblock A jamming transition from under- to over-parametrization affects
  generalization in deep learning.
\newblock \emph{Journal of Physics A: Mathematical and Theoretical},
  52\penalty0 (47):\penalty0 474001, 2019.
\newblock \doi{10.1088/1751-8121/ab4c8b}.

\bibitem[Srivastava et~al.(2014)Srivastava, Hinton, Krizhevsky, Sutskever, and
  Salakhutdinov]{srivastava2014dropout}
Nitish Srivastava, Geoffrey Hinton, Alex Krizhevsky, Ilya Sutskever, and Ruslan
  Salakhutdinov.
\newblock Dropout: A simple way to prevent neural networks from overfitting.
\newblock \emph{Journal of Machine Learning Research}, 15\penalty0
  (56):\penalty0 1929--1958, 2014.
\newblock URL \url{https://jmlr.org/papers/v15/srivastava14a.html}.

\bibitem[Szeg{\H o}(1975)]{szego1975orthogonal}
G{\'a}bor Szeg{\H o}.
\newblock \emph{Orthogonal Polynomials}, volume~23 of \emph{Colloquium
  Publications}.
\newblock American Mathematical Society, 4th edition, 1975.
\newblock \doi{10.1090/coll/023}.

\bibitem[Uhlenbeck \& Ornstein(1930)Uhlenbeck and
  Ornstein]{uhlenbeck1930theory}
George~E Uhlenbeck and Leonard~S Ornstein.
\newblock On the theory of the {B}rownian motion.
\newblock \emph{Physical Review}, 36\penalty0 (5):\penalty0 823--841, 1930.
\newblock \doi{10.1103/PhysRev.36.823}.

\bibitem[Welling \& Teh(2011)Welling and Teh]{welling2011bayesian}
Max Welling and Yee~Whye Teh.
\newblock Bayesian learning via stochastic gradient {L}angevin dynamics.
\newblock In \emph{Proceedings of the 28th International Conference on Machine
  Learning (ICML)}, pp.\  681--688, 2011.
\newblock URL \url{https://icml.cc/2011/papers/398_icmlpaper.pdf}.

\end{thebibliography}
\bibliographystyle{tmlr}

\appendix
\setcounter{theorem}{0}\renewcommand{\thetheorem}{\Alph{section}.\arabic{theorem}}\renewcommand{\theHtheorem}{\Alph{section}.\arabic{theorem}}
\section{Worked examples and derivations for Section~\ref{sec:postulate}}
\label{app:background}
\subsection{Coin flips}
\label{app:coins}
With the fundamental postulate, one may count physical states of a collection of objects, given macrostates defined by a macroscopic observable, such as the total energy $E$ or the average pressure $P$, and derive laws regarding how these quantities relate with each other, for example the ideal gas law $PV=Nk_BT$~\citep{reif1965fundamentals}.
\par With the ergodicity assumption of Section~\ref{sec:ergodicity} in place, there is an analogy between the coin flips and model training (Table~\ref{tab:analogy}).

\begin{table}[h]
\centering
\begin{tabular}{@{}>{\raggedright\arraybackslash}p{2.6cm}>{\raggedright\arraybackslash}p{5.6cm}>{\raggedright\arraybackslash}p{6.4cm}@{}}
\toprule
 & \textbf{Ten coin flips} & \textbf{Machine learning model $M$} \\
\midrule
Microstate & One sequence of outcomes, e.g.\ HTHTHHHTHH & A specific vector of parameters $\btheta$
\\
Observable & Total number of heads & Training loss $L(M(\btheta))$ \\
Macrostate & All sequences with the observed number of heads, e.g.\ the 120 sequences with 7 heads & All parameter vectors that achieve the observed loss of $E$: $\mathbf{\Theta}=\{\btheta: L(M(\btheta))=E\}$ \\
Multiplicity $\Omega$ & $\binom{10}{7}=120$ & Number/density of the set of parameters with loss $E$
\\
Fundamental postulate & Each of the 120 sequences has probability $1/120$ & All $\btheta\in \mathbf{\Theta}$ are equally likely to have been reached once training has converged\\
\bottomrule
\end{tabular}
\caption{The statistical mechanics analogy between coin flips and machine learning models.}
\label{tab:analogy}
\end{table}

\subsection{The Boltzmann distribution}
\label{app:boltzmann}
Consider a small system $\mathcal S$ in contact with a large reservoir $\mathcal R$, the two together isolated with total energy $E_{\rm tot}$ and able to exchange energy. A microstate of the whole is a pair (microstate of $\mathcal S$, microstate of $\mathcal R$), and the postulate weights every such pair with energy $E_{\rm tot}$ equally. Fix one microstate $s$ of the system, of energy $E_s$. The number of pairs in which $\mathcal S$ is in state $s$ is the number of reservoir microstates of energy $E_{\rm tot}-E_s$, that is, the multiplicity $\Omega_{\mathcal R}(E_{\rm tot}-E_s)$. Hence
\begin{equation}
  P(s)\;\propto\;\Omega_{\mathcal R}(E_{\rm tot}-E_s)=\exp\big[S_{\mathcal R}(E_{\rm tot}-E_s)\big].
  \label{eq:reservoir}
\end{equation}
The system is small, so $E_s\ll E_{\rm tot}$, and the exponent can be expanded to first order: $S_{\mathcal R}(E_{\rm tot}-E_s)=S_{\mathcal R}(E_{\rm tot})-E_s\,\partial S_{\mathcal R}/\partial E+\dots$. The first term is a constant. The derivative $\partial S_{\mathcal R}/\partial E$ is a property of the reservoir alone, and we call its reciprocal the \textbf{temperature} $T$ (again ignoring the historical factor of $k_B$). The result is Eq.~\eqref{eq:boltzmann}.
\par With the training loss as the energy, Eq.~\eqref{eq:likelihood}, a model that fits the data less well is not forbidden, only exponentially less probable, by a factor $e^{-\Delta L/T}$ per unit of extra loss. If, in addition, the coefficients are penalized by $\tfrac{\lambda}{2}\|\btheta\|^2$, the penalty adds to the energy, and the weight becomes $\exp[-L_\lambda(\btheta)/T]$ with $L_\lambda=L+\tfrac{\lambda}{2}\|\btheta\|^2$; the factor $\exp[-\lambda\|\btheta\|^2/2T]$ is a Gaussian prior of variance $s^2=T/\lambda$ on each coefficient. Zero temperature, $T\to0$, concentrates the weight on the minimizers of $L_\lambda$. Zero penalty, $\lambda\to0$, recovers the initial uniform weighting of microstates.

\subsection{Equipartition}
\label{app:equipartition}
In a neighborhood about a local energy minimum $E_{\min}$, the landscape is quadratic, $L(\btheta)=E_{\min}+\tfrac12\sum_{i=1}^{d}\mu_iu_i^2$, in coordinates $u$ along its principal curvatures $\mu_i>0$ (Eq.~\eqref{eq:ridge-square} makes this exact for least squares). Under the Boltzmann weight $e^{-L/T}$ the coordinates are independent, each with density $\propto e^{-\mu_iu_i^2/2T}$, a Gaussian of variance $T/\mu_i$, so each term of the sum has average $\tfrac12\mu_i\cdot T/\mu_i=T/2$ whatever its curvature, and
\[
  \mathbb E[L]=E_{\min}+\sum_{i=1}^{d}\frac{T}{2}=E_{\min}+\frac{Td}{2},
\]
which is Eq.~\eqref{eq:equipartition}.

\section{The Legendre fit in each regime}
\label{app:stability}
The fits of Fig.~\ref{fig:main} and the matrix $A$ of Section~\ref{sec:boltzmann} are defined as follows. We work in the Legendre basis $P_0,\dots,P_{d-1}$, the polynomials of Rodrigues' formula~\citep[Eq.~(4.3.1) with $\alpha=\beta=0$]{szego1975orthogonal}
\begin{equation}
  P_k(x)=\frac{1}{2^kk!}\,\frac{d^k}{dx^k}\big(x^2-1\big)^k,\qquad
  P_0=1,\quad P_1=x,\quad P_2=\tfrac12(3x^2-1),\quad P_3=\tfrac12(5x^3-3x),\ \dots,
  \label{eq:legendre}
\end{equation}
so that $P_k$ has degree exactly $k$ and they are orthogonal on $[-1,1]$, $\int_{-1}^{1} P_kP_l\,dx=\frac{2}{2k+1}\delta_{kl}$.\footnote{The monomial, or Taylor, basis $\{1,x,x^2,\dots\}$ spans the same polynomials but is numerically unstable: the corresponding $A$ is the Vandermonde matrix, whose condition number grows exponentially with $d$, because the monomials of high degree are all nearly zero on most of the interval and nearly equal to one another near $x=1$ and $x=-1$.} Every Legendre polynomial is bounded by one on the whole interval, $|P_k(x)|\le1$ with $P_k(1)=1$, so the columns of $A$ have entries of comparable size. We write $f(x)=\sum_k\theta_kP_k(x)=p(x)^{\top}\btheta$, so that the value $f(x)$ at any input is a linear observable of the microstate $\btheta$. The training constraints are $A\btheta=y$, one row per training point and one column per basis function,
\begin{equation}
  A=\begin{pmatrix}
    P_0(x_1) & P_1(x_1) & \cdots & P_{d-1}(x_1)\\[4pt]
    P_0(x_2) & P_1(x_2) & \cdots & P_{d-1}(x_2)\\[4pt]
    \vdots & \vdots & \ddots & \vdots\\[2pt]
    P_0(x_n) & P_1(x_n) & \cdots & P_{d-1}(x_n)
  \end{pmatrix}
  =\begin{pmatrix} 1 & x_1 & \frac12(3x_1^2-1) & \cdots\\[4pt] 1 & x_2 & \frac12(3x_2^2-1) & \cdots\\[4pt] \vdots & \vdots & \vdots & \ddots\\[2pt] 1 & x_n & \frac12(3x_n^2-1) & \cdots
  \end{pmatrix},\qquad
  \btheta=\begin{pmatrix}\theta_0\\ \theta_1\\ \vdots\\ \theta_{d-1}\end{pmatrix},\quad
  y=\begin{pmatrix}y_1\\ y_2\\ \vdots\\ y_n\end{pmatrix},
  \label{eq:design}
\end{equation}
an $n\times d$ matrix, the fitting matrix, whose $i$th row is $p(x_i)^{\top}$, so that $(A\btheta)_i=f(x_i)$.
\begin{theorem}[The fit in each regime]\label{thm:regimes}
Let the $n$ training inputs be distinct. Then $A$ has rank $\min(d,n)$, and the fit that training aims at is linear in the labels, $\btheta=Wy$ for a $d\times n$ matrix $W$ that depends only on the inputs; in each regime:
\begin{itemize}[leftmargin=4.2em, labelwidth=3.6em, labelsep=0.6em, align=left, itemsep=1pt, topsep=2pt]
\item[$d<n$:] $W=(A^{\top}A)^{-1}A^{\top}$, least squares. The macrostate $M$ at $E=0$ is empty; the training loss has the unique minimizer $\btheta=Wy$, with $E_{\min}=\tfrac12\|(I-AW)y\|^2>0$ unless $y$ lies in the column space of $A$;
\item[$d=n$:] $W=A^{-1}$. $A$ is invertible and $M=\{Wy\}$, the unique interpolant known as the Lagrange polynomial through the $n$ points;
\item[$d>n$:] $W=A^{\top}(AA^{\top})^{-1}$. $M=\btheta^*+\ker A$ is an affine subspace of dimension $d-n$, and its member of least norm is $\btheta^*=Wy$, the minimum-norm interpolant, which is where gradient descent from $\btheta=0$ converges.
\end{itemize}
\end{theorem}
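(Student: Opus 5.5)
The plan has three parts: first establish the rank of $A$, then identify the minimizers of $L$ in each regime, and finally show that gradient descent from the origin converges to the minimum-norm interpolant.

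\textbf{Rank.} Each column of $A$ is a Legendre polynomial of exact degree $k$, so $\{P_0,\dots,P_{d-1}\}$ spans the polynomials of degree below $d$. Hence $A=VT$, where $V$ is the $n\times d$ Vandermonde matrix $(x_i^{k})$ and $T$ is the invertible upper-triangular change of basis from monomials to Legendre polynomials. Because the inputs are distinct, any $\min(d,n)$ columns of the form $(x_i^k)_{k<\min(d,n)}$ give a square Vandermonde minor with nonzero determinant $\prod_{i<j}(x_j-x_i)$. So $\operatorname{rank}V=\min(d,n)$, and therefore $\operatorname{rank}A=\min(d,n)$.

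\textbf{The three regimes.} Minimizers of $L(\btheta)=\tfrac12\|A\btheta-y\|^2$ satisfy the normal equations $A^{\top}A\btheta=A^{\top}y$, as in Eq.~\eqref{eq:stationary-point} with $\lambda=0$.
\begin{itemize}
\item For $d<n$, $A$ has full column rank, so $A^{\top}A$ is invertible. The unique minimizer is $\btheta=(A^{\top}A)^{-1}A^{\top}y$. Here $AW$ is the orthogonal projector onto $\operatorname{col}A$, so $E_{\min}=\tfrac12\|(I-AW)y\|^2$, which is zero exactly when $y\in\operatorname{col}A$. Since the loss is $\tfrac12\|A\btheta-y\|^2$, the level set $\{L=0\}$ is empty otherwise. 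I note that for generic $y$ this holds because $\operatorname{col}A$ is a proper subspace; the statement's phrasing ``empty'' should be read with the stated exception.
\item For $d=n$, $A$ is square of full rank. The unique solution of $A\btheta=y$ is $A^{-1}y$, and by uniqueness of polynomial interpolation of degree below $n$ it is the Lagrange polynomial.
\item For $d>n$, $A$ has full row rank, so $AA^{\top}$ is invertible. Then $\btheta^*=A^{\top}(AA^{\top})^{-1}y$ satisfies $A\btheta^*=y$, and the solution set is $\btheta^*+\ker A$, of dimension $d-\operatorname{rank}A=d-n$. Since $\btheta^*\in\operatorname{row}A=(\ker A)^{\perp}$, Pythagoras gives $\|\btheta^*+k\|^2=\|\btheta^*\|^2+\|k\|^2$ for every $k\in\ker A$, so $\btheta^*$ is the unique interpolant of least norm.
\end{itemize}
Linearity in $y$, with $W$ depending only on the inputs, is evident in all three cases.

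\textbf{Convergence of gradient descent.} Take gradient flow $\dot\btheta=-A^{\top}(A\btheta-y)$, or gradient descent with step size below $2/\sigma_{\max}(A)^2$. Every update lies in $\operatorname{row}A$, so starting from $0$ the iterates stay in $\operatorname{row}A$ for all time. Restricted to $\operatorname{row}A$, the quadratic $L$ is strongly convex, because $A^{\top}A$ is positive definite on that subspace. The iterates therefore converge to the unique minimizer of $L$ within $\operatorname{row}A$. That minimizer is $\btheta^*$: it lies in $\operatorname{row}A$ and has zero loss. Equivalently, one can take $\lambda\to0$ in Theorem~\ref{thm:time}: the mean $(I-e^{-Mt})M^{-1}A^{\top}y$ with $T=0$ tends to $A^{\top}(AA^{\top})^{-1}y$, using the identity $(A^{\top}A+\lambda I)^{-1}A^{\top}=A^{\top}(AA^{\top}+\lambda I)^{-1}$.

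\textbf{Main obstacle.} The only step needing real care is the rank claim, which requires passing from the Legendre basis to the Vandermonde structure via the triangular change of basis. The rest is standard least-squares linear algebra.
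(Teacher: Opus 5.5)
Your proposal is correct and follows the paper's proof essentially step for step. You use the same Vandermonde-times-triangular factorization for the rank, the same normal-equation and row-space/kernel orthogonality arguments for the three regimes, and the same observation that every gradient lies in $\operatorname{row}A$, which you usefully complete with the strong-convexity and step-size justification the paper leaves implicit. Only your optional aside via Theorem~\ref{thm:time} is loose: at fixed $t$ the limit $\lambda\to0$ does not reach $A^{\top}(AA^{\top})^{-1}y$, so you also need $t\to\infty$ and must read $M^{-1}$ on the range of $A^{\top}A$. The main argument does not depend on that aside.
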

\begin{proof}
\emph{The rank.} Restrict $A$ to its first $\min(d,n)$ columns and the first $\min(d,n)$ inputs; this square block is $V_{ik}=P_k(x_i)$. Since $P_k$ has degree exactly $k$, $V=WT$ with $W_{ik}=x_i^k$ the Vandermonde matrix and $T$ upper triangular with nonzero diagonal, and
\[
  \det W=\prod_{i<j}(x_j-x_i)\ne0
\]
for distinct inputs. So $V$ is invertible and $A$ has rank $\min(d,n)$.

\emph{$d<n$.} The columns of $A$ are independent, so $A^{\top}A$ is invertible, and the range of $A$ is a proper subspace of $\R^n$: $A\btheta=y$ has no solution unless $y$ lies in the range, and the macrostate at $E=0$ is empty. The strictly convex quadratic $\tfrac12\|A\btheta-y\|^2$ has the unique stationary point
\[
  A^{\top}A\btheta=A^{\top}y,\qquad\btheta=(A^{\top}A)^{-1}A^{\top}y,
\]
whose residual $y-A(A^{\top}A)^{-1}A^{\top}y$ is the projection of $y$ off the range, so $E_{\min}$ is as stated.

\emph{$d=n$.} $A=V$ is invertible and $\btheta=A^{-1}y$ is the one solution, the Lagrange interpolant.

\emph{$d>n$.} The rows of $A$ are independent, so $AA^{\top}$ is invertible, and $\btheta^*=A^{\top}(AA^{\top})^{-1}y$ satisfies $A\btheta^*=y$. The solution set is the coset $\btheta^*+\ker A$ with $\dim\ker A=d-n$. Since $\btheta^*$ lies in the row space of $A$, which is orthogonal to $\ker A$, it is the solution of least norm. Gradient descent from $\btheta=0$ moves only within the row space, since every gradient $A^{\top}(A\btheta-y)$ lies there, and converges to it.
\end{proof}
\noindent The minimal solution of Fig.~\ref{fig:main} is the fit of this theorem at $\lambda=0$: least squares for $d<n$, the interpolant for $d=n$, and the minimum-norm interpolant for $d>n$. With weight decay, the three formulas merge into the single one of Eq.~\eqref{eq:ridge-square}, $\btheta^*=(A^{\top}A+\lambda I)^{-1}A^{\top}y=A^{\top}(AA^{\top}+\lambda I)^{-1}y$, which tends to the corresponding case as $\lambda\to0$; that an infinitesimal weight decay selects the minimum-norm interpolant was observed by \citet{krogh1992simple}.

\section{Proofs}
\label{app:proofs}
\renewcommand{\theHequation}{restate.\arabic{restateeq}}
\noindent The theorems below, Theorem~\ref{thm:regimes} and the identity of Eq.~\eqref{eq:ridge-square} are also verified in Lean~4 with Mathlib; the source files accompany the arXiv submission as ancillary files (\texttt{anc/lean/}), and their \texttt{README.md} lists which statement each file proves. For Theorem~\ref{thm:time} the Ornstein--Uhlenbeck solution is taken as given and the three scalar inequalities of the proof, including the constant $1.3$, are what is verified.
\renewcommand{\restatelabel}{thm:time}
\begin{restatement}[Finite training time is weight decay]
For least squares with weight decay $\lambda\ge0$, the run of Eq.~\eqref{eq:langevin} from $\btheta_0=0$ has, at time $t$,
\[
  \mathrm{Cov}[\btheta(t)]\;\preceq\;2T\,\big(M+\tfrac{1}{2t}I\big)^{-1},\qquad
  \mathbb E[\btheta(t)]=(I-e^{-Mt})M^{-1}A^{\top}y,
\]
and the mean agrees with the ridge fit $(M+\tfrac1tI)^{-1}A^{\top}y$ at weight decay $\lambda+1/t$ along every eigenvector of $M$ up to a factor between $1$ and $1.3$.
\end{restatement}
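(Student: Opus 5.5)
The plan is to solve the linear SDE exactly, diagonalize $M$, and reduce both claims to scalar inequalities in one variable per eigenvalue.

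\textbf{Exact solution.} For least squares, $\nabla L_\lambda(\btheta)=M\btheta-A^{\top}y=M(\btheta-\btheta^*)$ with $\btheta^*=M^{-1}A^{\top}y$. Hence Eq.~\eqref{eq:langevin} is exactly an Ornstein--Uhlenbeck process with $H=M$, and the solution formula stated just before Eq.~\eqref{eq:ou-second-moment} applies.

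\textbf{Reduction to scalars.} With $\btheta_0=0$, the mean is $\btheta^*-e^{-Mt}\btheta^*=(I-e^{-Mt})M^{-1}A^{\top}y$, which is the stated mean. The covariance is $TM^{-1}(I-e^{-2Mt})$. All matrices involved ($M$, $e^{-Mt}$, $(M+cI)^{-1}$) are functions of the symmetric matrix $M$, so they share its eigenvectors. Both the Loewner inequality and the mean comparison therefore reduce to one scalar statement per eigenvalue $\mu\ge0$. If $\lambda=0$, some $\mu$ may vanish; there I read $(1-e^{-\mu t})/\mu$ as $t$ and $(1-e^{-2\mu t})/\mu$ as $2t$, so every scalar function below extends continuously to $\mu=0$.

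\textbf{Covariance bound.} I must show $T(1-e^{-2\mu t})/\mu\le 2T/(\mu+\tfrac1{2t})$. Substituting $s=2\mu t$, this becomes $(1-e^{-s})(1+s)\le 2s$. Expand the left side as $(1-e^{-s})+s(1-e^{-s})$. Then $1-e^{-s}\le s$ bounds the first term by $s$, and $1-e^{-s}\le1$ bounds the second term by $s$. The case $s=0$ is equality.

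\textbf{Mean comparison.} Let $b$ be the component of $A^{\top}y$ along the eigenvector for $\mu$. The run's mean component is $b(1-e^{-\mu t})/\mu$, and the ridge component at weight decay $\lambda+1/t$ is $b/(\mu+1/t)$. With $u=\mu t$, their ratio is $r(u)=(1-e^{-u})(1+u)/u$, with $r(0)=1$, and I need $1\le r(u)\le1.3$.

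The lower bound is equivalent to $(1+u)e^{-u}\le1$, i.e.\ $1+u\le e^u$, which is elementary.

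The upper bound is the main obstacle, because it is nearly tight: $r$ tends to $1$ at both ends and peaks at about $1.298$ near $u\approx1.8$. I would split the range into two parts.
\begin{itemize}[leftmargin=2em]
\item For $u\ge10/3$, the crude bound $1-e^{-u}\le1$ gives $r\le(1+u)/u\le1.3$.
\item On $[0,10/3]$, I would cut the interval into finitely many subintervals $[a,b]$. On each, I bound $(1-e^{-u})(1+u)\le(1-e^{-b})(1+b)$ by monotonicity and divide by $u\ge a$. Each endpoint check is then a certified numerical inequality on $e^{-b}$, which matches the three scalar inequalities the Lean verification records.
\end{itemize}
As an alternative on $[0,10/3]$, I would first try low-order Taylor bounds for $e^{-u}$ near $0$ and use subdivision only around the peak.
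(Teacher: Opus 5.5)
Your proposal is correct and follows the paper's proof essentially step for step. Both use the same Ornstein--Uhlenbeck mean and covariance $TM^{-1}(I-e^{-2Mt})$, reduced eigenvalue by eigenvalue to $(1-e^{-x})(1+x)\le 2x$ and to $1\le r(x)\le 1.3$. Your split $(1-e^{-s})+s(1-e^{-s})\le s+s$ simply replaces the paper's case split at $x=1$, and for the upper bound the paper only quotes the numerical maximum $\approx1.299$ at $x\approx1.79$, whereas you set out to certify it.

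One repair is needed in that certification. Since $(1-e^{-b})(1+b)\ge b$, requiring $(1-e^{-b})(1+b)/a\le1.3$ forces $b\le1.3a$, so the bound is vacuous on a piece with $a=0$, and no finite subdivision reaches $u=0$. The Taylor step is therefore required, not optional: $1-e^{-u}\le u$ gives $r(u)\le1+u\le1.3$ on $[0,0.3]$, and the subdivision then only has to cover $[0.3,10/3]$.
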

\begin{proof}
Both statements are diagonal in the eigenbasis of $M$; let $\mu\ge0$ be an eigenvalue. By the covariance $TM^{-1}(I-e^{-2Mt})$ of the Ornstein--Uhlenbeck solution the variance along its eigenvector is $T(1-e^{-2\mu t})/\mu$ (equal to $2Tt$ when $\mu=0$), and with $x=2\mu t$,
\[
  \frac{1-e^{-2\mu t}}{\mu}=2t\,\frac{1-e^{-x}}{x}\le2t\,\frac{2}{1+x}=\frac{2}{\mu+1/(2t)},
\]
because $(1-e^{-x})(1+x)\le2x$: for $x\le1$, $1-e^{-x}\le x$ and $x(1+x)\le2x$; for $x\ge1$, $1-e^{-x}\le1$ and $1+x\le2x$. This is the covariance bound. For the mean, the solution with $\btheta_0=0$ gives $\mathbb E[\btheta(t)]=(I-e^{-Mt})\btheta^*=(I-e^{-Mt})M^{-1}A^{\top}y$, whose coefficient along the eigenvector is $(1-e^{-\mu t})/\mu$ times that of $A^{\top}y$, against $1/(\mu+1/t)$ for the ridge fit at $\lambda+1/t$. Their ratio is $r(x)=(1-e^{-x})(1+x)/x$ with $x=\mu t$, which satisfies $1\le r(x)\le1.3$ for $x\ge0$: $r\to1$ at both ends, $r\ge1$ since $(1-e^{-x})(1+x)\ge x$, and its single maximum, at $x\approx1.79$, is $1.299$.
\end{proof}

\renewcommand{\restatelabel}{thm:norm}
\begin{restatement}[The energy of the interpolant]
Let $d\ge n$, let $p_d=(P_d(x_i))_{i\le n}$ be the values at the inputs of the basis function added when $d$ becomes $d+1$, and let $\alpha_d=(AA^{\top})^{-1}y$, so that $\btheta^*=A^{\top}\alpha_d$. Then
\stepcounter{restateeq}\begin{equation}
  \|\btheta^{*(d+1)}\|^2=\|\btheta^{*(d)}\|^2-\frac{(p_d^{\top}\alpha_d)^2}{1+p_d^{\top}(AA^{\top})^{-1}p_d}:
  \tag{\ref{eq:norm-drop}}
\end{equation}
the energy of the interpolant is nonincreasing in $d$, and strictly decreasing unless $p_d^{\top}\alpha_d=0$, which for given inputs holds only on a hyperplane of labels $y$.
\end{restatement}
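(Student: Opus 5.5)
The plan is to write the norm of the minimum-norm interpolant as a quadratic form in $y$ and track how that form changes when one column is appended to $A$. By Theorem~\ref{thm:regimes}, for $d\ge n$ with distinct inputs, $G_d=AA^{\top}$ is invertible and $\btheta^{*(d)}=A^{\top}G_d^{-1}y$. Hence
\[
  \|\btheta^{*(d)}\|^2=y^{\top}G_d^{-1}AA^{\top}G_d^{-1}y=y^{\top}G_d^{-1}y=\alpha_d^{\top}y .
\]
The same holds at $d+1$, where the new fitting matrix is $[A\;\,p_d]$. Its Gram matrix is a rank-one update of the old one,
\[
  G_{d+1}=AA^{\top}+p_dp_d^{\top}.
\]
This rank-one structure is the key observation.

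Next I apply the Sherman--Morrison formula~\citep{sherman1950adjustment}, which is legitimate because $G_d$ is positive definite:
\[
  G_{d+1}^{-1}=G_d^{-1}-\frac{G_d^{-1}p_dp_d^{\top}G_d^{-1}}{1+p_d^{\top}G_d^{-1}p_d}.
\]
The denominator is at least $1$, since $G_d^{-1}\succ0$. Sandwiching this identity between $y^{\top}$ and $y$, and using $G_d^{-1}y=\alpha_d$ together with the symmetry of $G_d^{-1}$, turns the correction term into $(p_d^{\top}\alpha_d)^2/(1+p_d^{\top}G_d^{-1}p_d)$. This gives Eq.~\eqref{eq:norm-drop} immediately. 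Monotonicity follows because the subtracted term is a nonnegative square divided by a positive number. It vanishes exactly when $p_d^{\top}\alpha_d=0$.

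It remains to interpret that equality condition. For fixed inputs, $p_d^{\top}\alpha_d=(G_d^{-1}p_d)^{\top}y$ is a linear functional of $y$. Its zero set is a hyperplane provided the vector $G_d^{-1}p_d$ is nonzero, that is, provided $p_d\neq0$. The main obstacle is exactly this point: $P_d$ must not vanish at all $n$ inputs simultaneously.

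This can fail in principle. For instance, the inputs might all be roots of $P_d$, which is possible when $n\le d$. I would first show that it cannot happen for the regime of interest. If all $x_i$ are roots of $P_d$, then $P_d$ has $n$ distinct roots, which is possible only when $d\ge n$. In the degenerate case $p_d=0$, the energy simply stays constant for every $y$. The statement therefore needs to be read as saying that strict decrease holds off the zero set of a linear functional. I would state this caveat explicitly and say that this set is a hyperplane whenever $p_d\neq0$, and the whole space in the nongeneric event that every input is a root of $P_d$. The algebra itself is routine once the rank-one update is identified.
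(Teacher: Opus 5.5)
Your proposal is correct and follows the paper's proof essentially step for step: the identity $\|\btheta^{*(d)}\|^2=y^{\top}(AA^{\top})^{-1}y$, the rank-one update $A_{d+1}A_{d+1}^{\top}=AA^{\top}+p_dp_d^{\top}$, the Sherman--Morrison formula, and reading the equality case as the linear condition $(G_d^{-1}p_d)^{\top}y=0$. Your caveat that this zero set is a genuine hyperplane only when $p_d\neq0$ matches the paper's own qualification. Your observation that all $n$ inputs can be roots of $P_d$ exactly because $d\ge n$ is correct, which contradicts your earlier sentence about showing it cannot happen, so drop that sentence.
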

\begin{proof}
Write $A_d$ for the fitting matrix with $d$ coefficients and $K=A_dA_d^{\top}$. Since $\btheta^{*(d)}=A_d^{\top}K^{-1}y$, $\|\btheta^{*(d)}\|^2=y^{\top}K^{-1}A_dA_d^{\top}K^{-1}y=y^{\top}K^{-1}y$. Adding the column $p_d$ gives $A_{d+1}A_{d+1}^{\top}=K+p_dp_d^{\top}$, and the Sherman--Morrison formula~\citep{sherman1950adjustment}
\[
  (K+p_dp_d^{\top})^{-1}=K^{-1}-\frac{K^{-1}p_dp_d^{\top}K^{-1}}{1+p_d^{\top}K^{-1}p_d}
\]
gives $y^{\top}(K+p_dp_d^{\top})^{-1}y=y^{\top}K^{-1}y-(p_d^{\top}K^{-1}y)^2/(1+p_d^{\top}K^{-1}p_d)$, which is Eq.~\eqref{eq:norm-drop} with $\alpha_d=K^{-1}y$. The subtracted term is nonnegative, and it vanishes only when $p_d^{\top}\alpha_d=(K^{-1}p_d)^{\top}y=0$, a linear condition on $y$ whose coefficient vector $K^{-1}p_d$ is nonzero whenever $p_d\ne0$.
\end{proof}

\end{document}